\newtheorem{theorem}{Theorem}
\newtheorem{definition}{Definition}
\def\BibTeX{{\rm B\kern-.05em{\sc i\kern-.025em b}\kern-.08em
    T\kern-.1667em\lower.7ex\hbox{E}\kern-.125emX}}
\begin{document}

\title{Efficient Manifold-Constrained Neural ODE for High-Dimensional Datasets\\
}

\author{\IEEEauthorblockN{1\textsuperscript{st} Muhao Guo}
\IEEEauthorblockA{\textit{Department of ECEE} \\
\textit{Arizona State University}\\
Tempe, United States\\
mguo26@asu.edu}
\and
\IEEEauthorblockN{2\textsuperscript{nd} Haoran Li}
\IEEEauthorblockA{\textit{Department of ECEE} \\
\textit{Arizona State University}\\
Tempe, United States \\
lhaoran@asu.edu}
\and
\IEEEauthorblockN{3\textsuperscript{rd} Yang Weng*}
\IEEEauthorblockA{\textit{Department of ECEE} \\
\textit{Arizona State University}\\
Tempe, United States \\
yang.weng@asu.edu}
}

\maketitle

\begin{abstract}
Neural ordinary differential equations (NODE) have garnered significant attention for their design of continuous-depth neural networks and the ability to learn data/feature dynamics. However, for high-dimensional systems, estimating dynamics requires extensive calculations and suffers from high truncation errors for the ODE solvers. To address the issue, one intuitive approach is to consider the non-trivial topological space of the data distribution, i.e., a low-dimensional manifold. Existing methods often rely on knowledge of the manifold for projection or implicit transformation, restricting the ODE solutions on the manifold. Nevertheless, such knowledge is usually unknown in realistic scenarios. Therefore, we propose a novel approach to explore the underlying manifold to restrict the ODE process. Specifically, we employ a structure-preserved encoder to process data and find the underlying graph to approximate the manifold. Moreover, we propose novel methods to combine the NODE learning with the manifold, resulting in significant gains in computational speed and accuracy. Our experimental evaluations encompass multiple datasets, where we compare the accuracy, number of function evaluations (NFEs), and convergence speed of our model against existing baselines. Our results demonstrate superior performance, underscoring the effectiveness of our approach in addressing the challenges of high-dimensional datasets.
\end{abstract}

\section{Introduction}
Understanding and modeling the dynamics of complex systems is a fundamental challenge in various fields, 
including physics~\cite{weng2022transform, cui2023sig2vec}, biology~\cite{guo2023identifying}, engineering \cite{guo2023advantage}, natural language processing~\cite{guo2024transparent, guo2023msq, guo2024bayesian}, and large language models~\cite{guo2024bias}.
To learn the dynamics, two basic components need to be considered. The first is to learn a latent representation of the state of the system, and another is to learn how the latent state representation evolves forward in time. 

NODEs~\cite{chen2018neural} have emerged as a powerful framework for learning dynamics from data efficiently.
Their continuous-time modeling capabilities make them particularly suited for interpreting how the latent state representation evolves over time.
The core idea of NODEs is to use a neural network to parameterize a vector field~\cite{chen2018neural}, which is typically represented by a simple neural network~\cite{kidger2022neural}. 
The neural network considers the current state of the system as input and produces the time derivative of that state as output, which determines how the system will change over time. By integrating the vector field over time, it is possible to calculate the system's trajectory and make predictions about its future behavior.

However, capturing accurate representations in high-dimensional spaces with complex, unknown dynamics remains a significant challenge \cite{guo2023graph, guo2022patients}.
Existing methods often depend on numerical integration techniques~\cite{pal2021opening} 
or extend into higher-dimensional spaces~\cite{dupont2019augmented}. These approaches, while useful, can lead to increased computational complexity or introduce biases in the modeling process.
Recent work~\cite{lou2020neural} proposes to implicitly parameterize the original space with fewer parameters in the manifold. 
This achieves the state-of-the-art for system dynamics on a manifold, which is often the case for diversified engineering systems due to system constraints and conservation laws. This is because the proper introduction of the manifold enables the minimal truncation error for an ODE solver, much better than calculating ODEs in the ambient space. However, it requires the knowledge of the manifold.

\begin{figure}[t]
\centerline{
\includegraphics[trim=0cm 0cm 0cm 0cm, clip, width=1\columnwidth]{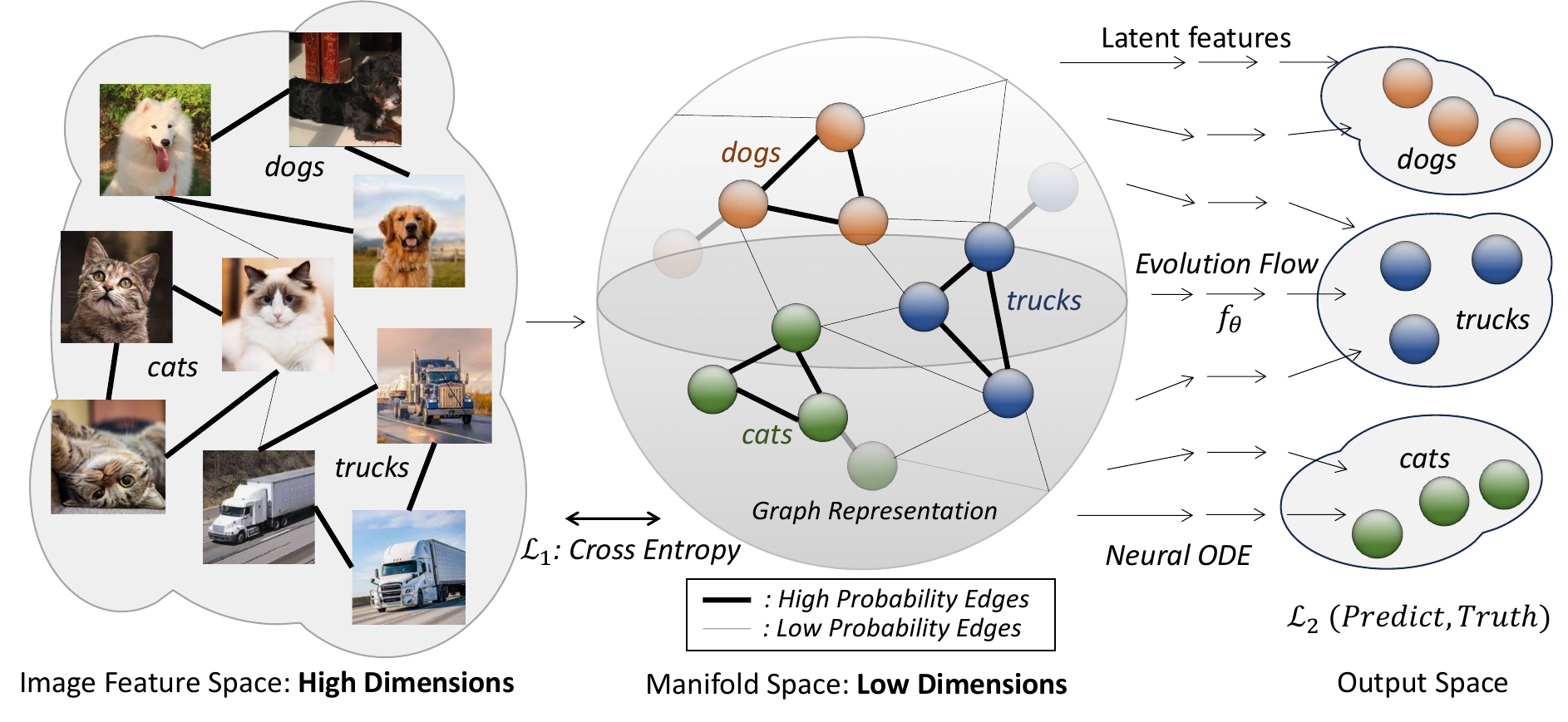}
}
\caption{Illustration of Manifold NODE Methodology. Original high-dimensional data is projected into a lower-dimensional manifold space, preserving intrinsic data structures. This projection is achieved by minimizing the cross-entropy loss between edge probability distributions of the original and manifold spaces. In this compact manifold space, NODEs are utilized to efficiently and accurately learn the evolution of latent features. The transition of learned flow from the latent space to the output space enhances the interpretability and transparency of the learning process.}
\label{fig: framework}
\end{figure}

One promising avenue to address these challenges lies in the field of manifold learning~\cite{lou2020neural,floryan2022data,lin2008riemannian}, a powerful approach that enables us to capture and represent the underlying structure of high-dimensional data. 
They generally start from a key assumption sometimes called the manifold hypothesis \cite{fefferman2016testing}, namely, the data lie on or near a low-dimensional manifold in state space.
Manifold learning techniques aim to uncover the intrinsic low-dimensional manifolds within complex, high-dimensional datasets. By doing so, they provide valuable insights into the underlying dynamics of systems. 

In this work, we propose an innovative data-driven approach tailored to learning dynamics.
By harnessing the principles of manifold learning, we focus on how the latent features evolve in the most representative manifold space.
This strategy not only simplifies the complexity inherent in dynamic learning but also ensures the preservation of accuracy, which is crucial when handling real-world data.
To guarantee that dynamic estimation can be conducted on the low dimensional feature space, the key is to preserve the manifold metric and structure in the feature space. This is because it ensures that the intrinsic geometric and topological relationships of the original high-dimensional data are maintained. This fidelity is essential for accurately modeling and predicting the system's dynamics, as it retains the fundamental characteristics and behaviors present in the original data space.

To do so, we first focus on how to accurately represent the structure of the original data. Graphs are highly effective in depicting both local and global structures within data, making graph construction an essential aspect of structure representation, particularly in the context of unknown or complex data. 
Equally important is the preservation of the data's inherent structure when transitioning to a low-dimensional space. The fundamental principle here is that data points close to each other in the original high-dimensional space should maintain their proximity in the reduced manifold space. This preservation ensures that the intrinsic relationships and structures within the data are not lost during the dimensionality reduction process.

Driven by these considerations, our approach begins with constructing a graph between datasets, followed by optimizing an embedding in a low-dimensional space that maintains the graph's structure. These concepts have been widely employed in various manifold learning methods, such as t-distributed Stochastic Neighbor Embedding (t-SNE) \cite{van2008visualizing} and Uniform Manifold Approximation and Projection (UMAP) \cite{sainburg2021parametric}.

In our methodology, we employ an encoder to derive the latent state representation in manifold space from the original space. This encoder is specifically trained using a cross-entropy cost function, which aims to minimize the discrepancy between a probability distribution in the manifold space and the distribution in the input space. These distributions are derived from the probabilistically weighted edges in the graph structures within manifold space and original space.
Concurrently, the manifold space is learned while a NODE determines the evolution of the latent state over time within this space. The ultimate objective is to map the latent state in the manifold space to a label space, as is typical in classification tasks. This specialized design of the encoder acts as a regularization mechanism for the temporal evolution of the latent state. 
It allows for parametric adjustments in preserving the global structure while enhancing the accuracy of the classifier by effectively capturing the inherent structure of the data. Our methodology, including these components and their interactions, is illustrated in Figure \ref{fig: framework}.

Our methodology addresses the complexities of dynamic learning in high-dimensional spaces by effectively reducing data to their intrinsic dimensionality within a nonlinear manifold. Given that image datasets typically exist in high-dimensional spaces, we have validated our approach across several image datasets, in addition to various time-series datasets. The results demonstrate that our model not only achieves superior accuracy but also operates at a faster speed and with fewer NFEs compared to baseline models.

\section{Related Work}
\subsection{Neural ODE (NODE)}
The basic idea of neural ordinary differential equations was originally considered in ~\cite{rico1992discrete}.
After ~\cite{chen2018neural} specified the architecture of NODEs and led to an explosion of applications in dynamic learning. For example, image classification~\cite{dupont2019augmented}, 
time series prediction~\cite{guo2023continuous},
time series classification ~\cite{kidger2020neural}, and continuous normalizing flows ~\cite{du2022flow}. According to ~\cite{chen2018neural}, the scalar-valued loss with respect to all inputs of any ODE solver can be computed directly without backpropagating through the operations of the solver. The intermediate quantities of the forward pass will not need to be stored. It causes the NODEs can be trained with a constant memory cost.

\subsection{Efficiency of NODEs}
As a continuous infinite-depth architecture, NODEs will bring several drawbacks.
The obvious drawback is that NODEs have a low training efficiency ~\cite{lehtimaki2022accelerating}.
To accelerate the training speed, several works have been done. Some works try to improve the efficiency of ODE solvers, such as regularizing the solver ~\cite{pal2021opening}, using interpolation backward dynamic methods ~\cite{daulbaev2020interpolation}, or using a second-order ODE optimizer ~\cite{liu2021second}.
Some works aim to optimize the objective function~\cite{xia2021heavy}. 
Simpler dynamics can lead to faster convergence and fewer discretizations of the solver ~\cite{finlay2020train}. 

One way to optimize the objective function is to take approximations of the learned dynamics. For example, ~\cite{finlay2020train} demonstrated that appropriate regularization of the learned dynamics can significantly accelerate training time without degrading performance. However, these approaches may be less accurate when encountering unsmooth dynamics, such as those with more oscillations or abrupt changes.
Other works are dedicated to optimizing the model structures, such as compressing the model. For example, ~\cite{lehtimaki2022accelerating} used model order reduction to obtain a smaller-size NODE model with fewer parameters. However, optimizing the model structure by compressing the model without considering the characteristics of the data can result in poor generalization capabilities.

\subsection{NODEs on Manifolds}
Manifold learning is a subfield of machine learning and dimensionality reduction that focuses on discovering the underlying structure or geometry of high-dimensional data. The central idea behind manifold learning is that many real-world datasets lie on or near lower-dimensional manifolds within the high-dimensional space \cite{floryan2022data}.
~\cite{lin2008riemannian} formulates the dimensionality reduction problem as a classical problem in Riemannian geometry. 
For dynamic learning, ~\cite{hairer2011solving} describes the differential equation on the manifold. Its solution evolves on a manifold, and the vector field is often only defined on this manifold. ~\cite{floryan2022data} explores the dynamics learning in the manifold using an auto-encoder. Our work utilizes the NODEs to learn better continuous dynamics.

Other works~\cite{mathieu2020riemannian}
also investigate the manifold generalization of NODEs. 
These works calculate either the change in probability with a Riemannian change of variables, or the change through the use of charts and Euclidean change of variables.
However, they are designed for normalizing flows, but the classification or regression task still remains to be investigated.

\section{Background and Preliminaries}
\subsection{Manifolds}
\noindent \textbf{Topological manifolds.}
A topological space $\mathcal{M}$ is a topological manifold of dimension $d$ if it satisfies the following conditions:
It is a second-countable Hausdorff space, ensuring that points can be separated by neighborhoods and that the topological structure is not too large.
It is locally Euclidean of dimension $d$, meaning that at every point on the manifold, there exists a small neighborhood where the space behaves like Euclidean space.
Furthermore, Whitney's embedding theorem~\cite{whitney1936differentiable} states that any $d$-dimensional manifold $\mathcal{M}^d$ can be embedded in $\mathbb{R}^{2d+1}$. This means that a space of at most $2d + 1$ dimensions is sufficient to represent a $d$-dimensional manifold.




\noindent \textbf{Differentiable manifolds.}
\label{Differentiable manifolds}
A topological manifold $\mathcal{M}$ is referred to as a smooth or differentiable manifold if it has the property of being continuously differentiable to any order. This implies that smooth functions can be defined on the manifold, making it suitable for calculus operations.

\begin{definition}[Smooth mapping]
\label{definition: Smooth mapping}
Consider two open sets, $U \subset \mathbb{R}^r$ and $V \subset \mathbb{R}^s$, and let $\mathcal{G} : U \rightarrow V$ be a function such that for $x \in U$ and $y \in V$, $\mathcal{G}(x) = y$.
If the function $\mathcal{G}$ has finite first-order partial derivatives, $ \frac{\partial{y_j}}{\partial{x_i}}$, for all $i = 1, 2, \cdots, r$, and all $j=1, 2, \dots, s$, then $\mathcal{G}$ is said to be a smooth (or differentiable) mapping on $U$. We also say that $\mathcal{G}$ is a $\mathcal{C}^1$-function on $U$ if all the first-order partial derivatives are continuous.
More generally, if $\mathcal{G}$ has continuous higher-order partial derivatives, $\frac{\partial^{k_1+ \cdots +k_r}{y_j}}{\partial{x_1^{k_1}} \cdots \partial{x_r^{k_r}}}$, for all $j = 1, 2, \cdots, s$ and all non-negative integers $k_1, k_2, \cdots, k_r$ such that $k_1 + k_2 + \cdots +k_r \leq r$, then we say that $\mathcal{G}$ is a $\mathcal{C}^r$-function, where $r = 1, 2, \cdots$.

\end{definition}
\begin{definition}[Diffeomorphism]
\label{definition: Diffeomorphism}
If $\mathcal{G}$ is a homeomorphism from an open set $U$ to an open set $V$, then $\mathcal{G}$ is said to be a $\mathcal{C}^r$ diffeomorphism if both $\mathcal{G}$ and its inverse $\mathcal{G}^{-1}$ are $\mathcal{C}^r$-functions.
\end{definition}

\begin{definition}[Diffeomorphic]
\label{definition: Diffeomorphic} $U$ and $V$ are diffeomorphic if there exists a diffeomorphism between them. 
\end{definition}

Following the Definition \ref{definition: Smooth mapping}, \ref{definition: Diffeomorphism}, and \ref{definition: Diffeomorphic}, we can straightforwardly extend these concepts to define diffeomorphism and diffeomorphic in manifolds~\cite{ma2011manifold}.

\begin{definition}[Diffeomorphism in manifolds]
\label{definition: Diffeomorphism in manifolds}
If $X$ and $Y$ are both smooth manifolds, a function $\mathcal{G} : X \rightarrow Y$ is a diffeomorphism if it is a homeomorphism from $X$ to $Y$ and both $\mathcal{G}$ and $\mathcal{G}^{-1}$ are smooth.
\end{definition}

\begin{definition}[Diffeomorphic of manifolds]
\label{definition: Diffeomorphic in manifolds}
Smooth manifolds $X$ and $Y$ are diffeomorphic if there exists a diffeomorphism between them. 
In this case, $X$ and $Y$ are essentially indistinguishable from each other. 
\end{definition}

\subsection{NODEs}
NODEs are a family of deep neural network models that can be interpreted as a continuous version of Residual Networks ~\cite{he2016identity}. Recall the formulation of a residual network:
\begin{equation}
    h_{t+1} - h_{t} = f(h_t, \theta_f),
    \label{res_net}
\end{equation}
where the $f$ is the residual block and the $\theta_f$ represents the parameters of $f$.
The left side of Equation \ref{res_net} can be seen as a denominator of $1$, so it can be represented by $\frac{h_{t+1} - h_{t}}{1} = f(h_t, \theta_f)$.
When the number of layers becomes infinitely large and the step becomes infinitely small, Equation \ref{res_net} will become an ODE format as shown in Equation \ref{ODE}.
\begin{equation}
    \lim_{dt \rightarrow 0} \frac{h_{t+dt} - h_{t}}{dt} = \frac{dh(t)}{dt} = f(h(t), t, \theta_f).
    \label{ODE}
\end{equation}
Thus, the NODE will have the same format as an ODE:
$h'(t) = f(h(t), t, \theta_f)$ and $h(0) = x_0$,
where $x_0$ is the input data.
Typically, $f$ will be some standard simple neural architecture, such as an MLP. The $\theta_f$ represents trainable parameters in $f$.
To obtain any final state of $h(t)$ when $t = T$, all that is needed is to solve an ODE with initial values, which is called an initial value problem (IVP):
\begin{equation}
    h(T) = h(0) + \int_{0}^{T} f(h(t), t, \theta_f)dt.
    \label{integrate}
\end{equation}
Thus, a NODE can transform from $h(0)$ to $h(T)$ through the solutions to the initial value problem (IVP) of the ODE.
This framework indirectly realizes a functional relationship $x \rightarrow F(x)$ as a general neural network.

By the properties of ODEs, NODEs are always invertible; we can reverse the limits of integration, or alternatively, integrate $-f$. The \textit{Adjoint Sensitivity Method}  ~\cite{pontryagin1961mathematical} based on reverse-time integration of an expanded ODE, allows for finding gradients of the initial value problem solutions $h(T)$ with respect to parameters $\theta_f$ and the initial values $h(0)$. 
This allows the training NODE to use gradient descent, which allows them to combine with other neural network blocks.

\section{Manifold-Constrained NODE}
\label{Learning Dynamics on Manifolds with Neural ODEs}
While NODEs excel at learning continuous transformations in Euclidean space, real-world data often exhibit complex geometries better represented by manifolds. By constraining NODEs to operate on a learned manifold, we can align the dynamics with the data's underlying topology, enhancing both interpretability and generalization.

Our approach begins with representing the structure in the manifold. Consider each sample as a data point within a given dataset. Assuming that data are uniformly distributed across a manifold in a warped data space, we calculate the distance between a data point and its $k^{th}$ nearest neighbor. This leads to the formulation of a likelihood scaled by these distances:$p_{j|i} = e^{\frac{-d(x_i, x_j) - \rho_i}{\sigma_i}}$,
where \( d(x_i, x_j) \) is the distance between points \( x_i \) and \( x_j \), \( \rho_i \) is a local connectivity parameter set to the distance from \( x_i \) to its nearest neighbor, and \( \sigma_i \) (default value is $log_2 k$) is a scaling parameter. This likelihood represents the probability of a point selecting another point as its neighbor. The global probability~\cite{sainburg2021parametric} is then defined as the probability of either of the two local probabilities occurring:
\begin{equation}
    p_{ij} = p_{j|i} + p_{i|j} - p_{j|i}p_{i|j}.
    \label{global_probability}
\end{equation}

Let $\mathcal{G}: \mathbb{R}^n \rightarrow \mathbb{R}^m$ be a manifold learning function, parameterized by an encoder.
This encoder maps data $x_i$ from the input space $\mathcal{X}$ to latent coordinates $z_i = \mathcal{G}(x_i) \in \mathcal{M}$ on the manifold. In $\mathcal{M}$, we calculate the probability by $q_{ij} = (1 + a || z_i - z_j||^{2b})^{-1}$. 
Instead of the Gaussian distribution, the fatter-tailed Student’s t-distribution is used to overcome the ``crowding problem" \cite{van2008visualizing} by allowing distant points in the high-dimensional space to be modeled as farther apart in the low-dimensional representation.
After obtaining the distribution $P$ in input space $\mathcal{X}$ and the distribution $Q$ in the manifold space $\mathcal{M}$, we can calculate the cross-entropy \cite{sainburg2021parametric} between them:
\begin{align}
    L_1 = \sum_{i\neq j} p_{ij} log(\frac{p_{ij}}{q_{ij}}) + (1-p_{ij}) log(\frac{1 - p_{ij}}{1 - q_{ij}})
\end{align} 

We now introduce the Manifold-Constrained NODE framework, which integrates manifold learning with NODEs to model dynamics that respect these geometric constraints.

\begin{definition}[Manifold-Constrained NODE]
A Manifold-Constrained NODE is a neural architecture comprising:
An encoder \(\mathcal{G}: \mathbb{R}^n \rightarrow \mathbb{R}^m\) that parameterizes a smooth manifold \(\mathcal{M}\) by mapping input data to a latent space. A Neural ODE that learns a smooth vector field \(f: \mathcal{M} \times \mathbb{R} \rightarrow T\mathcal{M}\) (where \(T\mathcal{M}\) is the tangent bundle of \(\mathcal{M}\)) to model the dynamics \(\frac{dh_{\mathcal{G}}}{d\tau} = f(h_{\mathcal{G}}, \tau, \theta)\) on \(\mathcal{M}\).
\end{definition}
The evolution of latent features $\frac{dh_\mathcal{G}}{dt}$ is guided by a vector field, which can be modeled by a neural network $f$. This is expressed as:
\begin{align}
    h_\mathcal{G}(T) = 
    \mathcal{G}(x_0) + \int_0^T f(h_\mathcal{G}, \tau, \theta) d\tau \in \mathcal{M}.
\end{align}
We give a theorem of the existence of dynamics on manifolds and the proof.
\begin{theorem}[Existence of Dynamics on Manifolds]
Let \(\mathcal{M}\) be a smooth, differentiable manifold embedded in \(\mathbb{R}^m\), and let \(f: \mathcal{M} \times \mathbb{R} \rightarrow T\mathcal{M}\) be a smooth, Lipschitz continuous vector field. For any initial condition \(h_{\mathcal{G}}(0) = \mathcal{G}(x_0) \in \mathcal{M}\), there exists a unique solution to the initial value problem:
\[
\frac{dh_{\mathcal{G}}}{d\tau} = f(h_{\mathcal{G}}, \tau, \theta), \quad h_{\mathcal{G}}(0) = \mathcal{G}(x_0),
\]
with \(h_{\mathcal{G}}(\tau) \in \mathcal{M}\) for all $\tau$.
\label{Existence_of_Dynamics_on_Manifolds}
\end{theorem}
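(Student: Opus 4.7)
The plan is to reduce the manifold-valued initial value problem to the classical Picard--Lindel\"of theorem in Euclidean space, exploiting the fact that $\mathcal{M}$ is smoothly embedded in $\mathbb{R}^m$. The core idea is that the tubular neighborhood theorem provides an open set $U \supset \mathcal{M}$ together with a smooth nearest-point projection $\pi: U \to \mathcal{M}$. Using this, I would extend $f$ off the manifold by setting $\tilde{f}(y, \tau, \theta) = f(\pi(y), \tau, \theta)$ on $U \times \mathbb{R}$. This extended vector field inherits Lipschitz continuity from $f$ since $\pi$ is smooth (hence locally Lipschitz), with the Lipschitz constant controlled by that of $f$ and the operator norm of $D\pi$.

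First, I would apply the classical Picard--Lindel\"of theorem to the extended ODE $\dot{y} = \tilde{f}(y, \tau, \theta)$ with initial condition $y(0) = \mathcal{G}(x_0) \in \mathcal{M} \subset U$. This yields a unique $C^1$ solution $y(\tau)$ on some open interval around $0$, which by the standard continuation argument extends to a maximal interval of existence. Second, I would show the solution remains on $\mathcal{M}$. The key observation is that $f$ is valued in $T\mathcal{M}$, so on $\mathcal{M}$ the extended field $\tilde{f}$ is everywhere tangent to the manifold. Working in a local chart near $\mathcal{G}(x_0)$, Picard--Lindel\"of applied to the coordinate representation of $f$ gives an intrinsic solution $\hat{y}(\tau) \in \mathcal{M}$ to the same IVP. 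Regarded as a curve in $\mathbb{R}^m$, $\hat{y}$ satisfies $\dot{\hat{y}} = \tilde{f}(\hat{y}, \tau, \theta)$ with the same initial condition. By uniqueness of the ambient solution, $y(\tau) = \hat{y}(\tau) \in \mathcal{M}$ on the common domain, and patching these coincidences along the maximal interval yields invariance everywhere.

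The main obstacle will be the invariance step, i.e., proving rigorously that the ambient solution never leaves the embedded manifold. An alternative, coordinate-free route is to examine the squared distance $\rho(\tau) = \operatorname{dist}(y(\tau), \mathcal{M})^2$: smoothness of $\pi$ and tangency of $\tilde{f}$ on $\mathcal{M}$ imply $\dot{\rho} \le C\rho$ locally, so Gr\"onwall's inequality with $\rho(0)=0$ forces $\rho \equiv 0$. A secondary concern is extending existence to all $\tau \in \mathbb{R}$: under only local Lipschitz continuity this may fail due to finite-time blow-up, so the statement is cleanest when $\mathcal{M}$ is either compact or closed in $\mathbb{R}^m$ with $f$ of at most linear growth, conditions that should be invoked (or assumed implicitly) to rule out escape to the boundary in finite time.
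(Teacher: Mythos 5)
Your proposal is correct, and it takes a genuinely different route from the paper. The paper argues purely intrinsically: it passes to local coordinate charts, expresses $f$ as an ODE in Euclidean coordinates, invokes Picard--Lindel\"of chart by chart, and then asserts that the local solutions patch together globally on $\mathcal{M}$. You instead work in the ambient space: extend $f$ off the manifold through the tubular-neighborhood projection $\pi$, solve the extended ODE with the classical theorem, and then prove invariance of $\mathcal{M}$ either by matching the ambient solution against the chart-based intrinsic one via uniqueness, or by a Gr\"onwall estimate on $\operatorname{dist}(y(\tau),\mathcal{M})^2$. The paper's route is shorter and avoids any extension construction, but it leaves implicit exactly the two points you make explicit: (i) why the trajectory stays on $\mathcal{M}$ (in the intrinsic argument this is automatic by construction, but the paper never says so, and your invariance step is what one must supply if the solution is regarded as a curve in $\mathbb{R}^m$, as the embedding suggests); and (ii) why the solution exists for \emph{all} $\tau$, which the paper's phrase ``patched together globally'' does not justify. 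Your observation that global existence requires compactness of $\mathcal{M}$, or closedness plus a growth condition on $f$, identifies a hypothesis genuinely missing from the theorem as stated --- local Lipschitz continuity alone permits finite-time blow-up, so the claim $h_{\mathcal{G}}(\tau)\in\mathcal{M}$ for all $\tau$ is not fully earned by the paper's own argument. In short, your version is longer but strictly more careful, and it surfaces a real gap rather than introducing one.
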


\begin{proof} 
Since \(\mathcal{M}\) is a smooth manifold, it admits local coordinate charts that are diffeomorphic to subsets of \(\mathbb{R}^m\). In these coordinates, the vector field \(f\) can be expressed as a system of ODEs in Euclidean space. The Lipschitz continuity of \(f\) ensures the applicability of the Picard-Lindel\"of theorem \cite{teschl2012ordinary}, guaranteeing a unique solution locally. By the manifold's smoothness and the encoder’s parameterization, these local solutions can be patched together globally, keeping trajectories on \(\mathcal{M}\).
\end{proof}
In downstream tasks, such as classification, the function $f$ learns the trajectories of dynamics from the latent feature space to a label space.
Another cross-entropy loss, combined with softmax, is employed for this purpose:
$L_2 = -\sum_i^{C} y_i log (c_i)$. Here, $C$ represents the number of classes, $c_i$ is the predicted probability of an instance belonging to class $i$ and $y_i$ is the true label.
Therefore, the loss of our model is a combination of these two components:
\begin{equation}
-\sum_i^{C} y_i log (c_i) + \sum_{i\neq j} p_{ij} log(\frac{p_{ij}}{q_{ij}}) + (1-p_{ij}) log(\frac{1 - p_{ij}}{1 - q_{ij}})
\end{equation}
The encoder $\mathcal{G}$ for the manifold and the neural network $f$ for the dynamic learning will simultaneously be optimized by gradient descent during the training process. 
Joint optimization ensures the manifold $\mathcal{M}$ and dynamics co-evolve to satisfy both geometric fidelity and task performance.
The pseudo-code of Manifold-Constrained NODE for the classification task is shown in Algorithm \ref{alg:manifold_constrained_node}.

\begin{algorithm}[ht]
\caption{Manifold-Constrained NODE}
\label{alg:manifold_constrained_node}
\begin{algorithmic}[1]
\Require 
\Statex $X = \{x_i\}_{i=1}^N$: dataset
\Statex $k$: \# of nearest neighbors
\Statex $a, b$: Student's t-dist. parameters
\Statex $\eta$: learning rate
\Statex $\theta_\mathcal{G}, \theta_f$: parameters in encoder and NODE function 
\For{epoch $= 1$ to max\_epochs}
    \For{each $x_i \in X$}
        \State Find $k$ nearest neighbors and compute distances $d(x_i,x_j)$
        \State $\rho_i \gets \min_j d(x_i,x_j)$
        \State $p_{j|i} \gets \exp\bigl[-\bigl(d(x_i,x_j) - \rho_i\bigr)/\sigma_i\bigr]$
    \EndFor
    \State $p_{ij} \gets p_{j|i} + p_{i|j} - p_{j|i}\,p_{i|j}\quad \forall i \neq j$
    \State $z_i \gets \mathcal{G}(x_i; \theta_\mathcal{G})\quad \forall i$
    \State $q_{ij} \gets \bigl(1 + a\|z_i - z_j\|^{2b}\bigr)^{-1}\quad \forall i \neq j$
    \State $Loss_1 \gets \displaystyle
        \sum_{i \neq j} \biggl[
            p_{ij} \ln\!\bigl(\tfrac{p_{ij}}{q_{ij}}\bigr)
          + (1-p_{ij}) \ln\!\bigl(\tfrac{1 - p_{ij}}{1 - q_{ij}}\bigr)
        \biggr]$
    \For{each $x_i$}
        \State $h_\mathcal{G}^i(T) = z_i + \int_0^T f(h_\mathcal{G}, \tau, \theta) d\tau \in \mathcal{M} $ 
        \State $c_i \gets \text{softmax}\bigl(\text{classifier}(h_\mathcal{G}^i(T))\bigr)$
    \EndFor
    
    \State $Loss_2 \gets -\sum_{i=1}^N \sum_{c=1}^C y_{i,c}\,\ln\bigl(c_{i,c}\bigr)$
    \State $Loss \gets Loss_1 + Loss_2$
    
    \State $\theta_\mathcal{G}, \theta_f \gets \theta_\mathcal{G}, \theta_f \;-\; \eta \;\nabla_{\theta_\mathcal{G},\theta_f} L$
\EndFor

\State \textbf{return} $\theta_\mathcal{G}, \theta_f$
\end{algorithmic}
\end{algorithm}

\section{Numerical Experiments}
\label{Experiments}
We will demonstrate the superiority of our methodology in terms of accuracy, NFEs, and convergence speed. 
In Section \ref{Experimental Setup}, we introduce the datasets and environment settings. 
In Section \ref{Learning Dynamics on known Spherical Space Manifold}, we present a special case where the manifold is known, specifically focusing on a spherical space manifold. Its purpose is to illustrate the critical role of manifold structure in enhancing the efficacy of dynamic learning.
In Section \ref{Image Classification Results} we show on three real-life image datasets that our model has better prediction accuracy, fewer NFEs, and faster convergence speed compared to baselines.
In Section \ref{Series Classification Results}, we apply our approach to three series datasets.
All the models were implemented in Python 3.9 and realized in PyTorch.
We employed a high-performance computing server equipped with NVIDIA A100-SXM4-80GB GPUs to train and evaluate all models and perform additional analysis.

\subsection{Experimental Setup}
\label{Experimental Setup}
\noindent \textbf{Datasets.} 
We evaluated our model with three image classification datasets and three series classification datasets.
For the image classification task, we evaluate our model on the MNIST ~\cite{deng2012mnist}, CIFAR-10 ~\cite{krizhevsky2009learning}, and SVHN ~\cite{netzer2011reading}.
MNIST is a handwritten digit database with a training set of $60,000$ samples. The CIFAR-10 training dataset consists of $60,000$ $32 \times 32$ color images in ten classes. SVHN is a digit classification dataset that contains $600,000$ $32\times32$ RGB images of printed digits (from $0$ to $9$) cropped from pictures of house number plates.
For series datasets, we use BeetleFly, HandOutlines, and ECG200, which come from ~\cite{bagnall2018uea}.
BeetleFly is a dataset that distinguishes between beetles and flies, where the outline of the original image is mapped to a one-dimensional series at a distance from the center.
HandOutlines is designed to test the efficacy of hand and bone outline detection and whether these outlines could be helpful in bone age prediction. 
ECG200 is a binary classification dataset that traces the electrical activity recorded during one heartbeat. The two classes are a normal heartbeat versus a myocardial infarction event.

\noindent \textbf{Evaluation metrics and baselines.} 
For the image and series classification task, we compared our model with NODEs, ANODEs\cite{dupont2019augmented}, CNN, Res-Net with $10$ residual blocks implemented in ~\cite{lin2018resnet}, and PCA+ANODEs in terms of test accuracy.
We also compared ours with ODE-based models in terms of NFEs, and convergence speed. 

\noindent \textbf{Parameter settings.} For image datasets, we set the batch size as $32$. We use the same vector field modeling in all baseline continuous models. The vector field is modeled by three convolutional layers. The in channels and out channels for each layer are set as $(N_{in}, 32)$, $(32, 32)$, and $(32, N_{in})$ respectively, where the $N_{in}$ represents the number of channels of the input image. The setting of CNN is the same as the convolutional vector field. For our model, we use the three-layer MLP to model the vector field since the input of NODE is flattened. The middle layer has $64$ neurons. 
For the encoder, we use two convolutional layers and three fully connected layers. The in channels and out channels for each convolutional layer are set as $(N_{in}, 64)$ and $(64,128)$. The kernel size and stride are set as $3$ and $2$ respectively. 
We use the $ReLU$ as the activation function. 
For the building process of graph structure, the number of neighbors is set as $15$ as the hyperparameter, which in our experience is not a sensitive one.
We use the Adam algorithm as the optimizer with a learning rate of $10^{-3}$. 
We run five epochs for each experiment since the experiment shows that five epochs are enough to converge. 
For Res-Net, we model it using $10$ residual blocks, and each block is implemented by a two-layer MLP ~\cite{lin2018resnet}.
For series classification tasks, we use the same vector field modeling in all continuous models. The vector field is implemented by a three-layer MLP with the hidden dimensions as $16$. 
We run $30$ epochs for each experiment. 
For all the continuous models, we set the same tolerance of the ODE solver, as $10^{-3}$. 
For all the augmented models, we use five extra dimensions. 

\subsection{Learning Dynamics on Spherical Manifold}
\label{Learning Dynamics on known Spherical Space Manifold}
To show the critical role of manifold structure in enhancing the efficacy of dynamic learning. We first learn the dynamics in a known manifold space.
Consider a specific scenario where the dynamics unfold within a spherical space with a radius of $R = 1$, referred to as $\mathcal{S}$.
In this context, it is known that the solution evolves within a submanifold of $\mathbb{R}^3$, and the vector field $f$ is defined on this submanifold.
Let $\mathcal{G}$ represent a manifold learning function defined as follows: $\mathcal{G}: \mathbb{R}^3 \rightarrow \mathcal{S} \subset \mathbb{R}^3$. In simpler terms, $\mathcal{G}$ is a function that maps from three-dimensional Euclidean space to a submanifold $\mathcal{S}$ embedded within three-dimensional Euclidean space.
Define $h$ as the state in three-dimensional Euclidean space, represented as 
$h = \left[ \begin{array}{c} x\\ y\\ z \end{array} \right] \in \mathbb{R}^3$. 
On the other hand, $l$ is the state within the submanifold, expressed as 
$l= \left[ \begin{array}{c} u\\v \end{array} \right] \in \mathbb{R}^2$. 
To establish a connection between the two representations, we can relate $u$ and $v$ to $h$ using the following equations:
$h = \left[ \begin{array}{c}
      x\\ 
      y\\ 
      z   
      \end{array} \right]
   = \left[ \begin{array}{c}
      R\cdot sin(u) cos(v)\\ 
      R \cdot sin(u) sin(v)\\ 
      R \cdot cos(u)  
      \end{array} \right].$
The derivative of $h$ with respect to $t$ represents the rate of change of state $h$ with respect to time:
    $
    \frac{dh}{dt} 
    = 
    \frac{dh}{dl} \cdot \frac{dl}{dt} 
    = 
    \left[ \begin{array}{c c}
    \frac{\partial x}{\partial u} & \frac{\partial x}{\partial v}\\ 
    \frac{\partial y}{\partial u} & \frac{\partial y}{\partial v}\\ 
    \frac{\partial z}{\partial u} & \frac{\partial z}{\partial v}
    \end{array} \right] \cdot \frac{dl}{dt} 
    =
    R \cdot
    \left[\begin{array}{cc}
    cos(u)cos(v) & - sin(u)sin(v)\\ 
    cos(u)sin(v) &  sin(u)cos(v) \\ 
    -sin(u)  & 0
    \end{array} \right]
    \cdot
    \left[\begin{array}{c}
    \frac{du}{dt}\\ 
    \frac{dv}{dt}
    \end{array} \right].
    $
Considering $\frac{dl}{dt}$ as the vector field within the manifold, we employ a neural network denoted as $f: l \rightarrow \frac{dl}{dt}$ to model this vector field. Function $f$ describes the evolution of the state $l$ within the manifold.
Given an initial state $h(0)$ in the original space, we integrate $\frac{dh}{dt}$ over time to derive the final state $h(T)$:
    $
    h(T) 
    = 
    h(0) + \int_0^T \frac{dh}{dt} dt
    =
    h(0) + \int_0^T R \cdot
    \left[\begin{array}{cc}
    cos(u)cos(v) & - sin(u)sin(v)\\ 
    cos(u)sin(v) &  sin(u)cos(v) \\ 
    -sin(u)  & 0 \end{array} \right] 
    \cdot f(l, t, \theta) dt.
  $

\begin{figure}[t]
\begin{center}
\includegraphics[trim=0cm 0.5cm 0cm 0cm, clip, width=0.9\columnwidth]{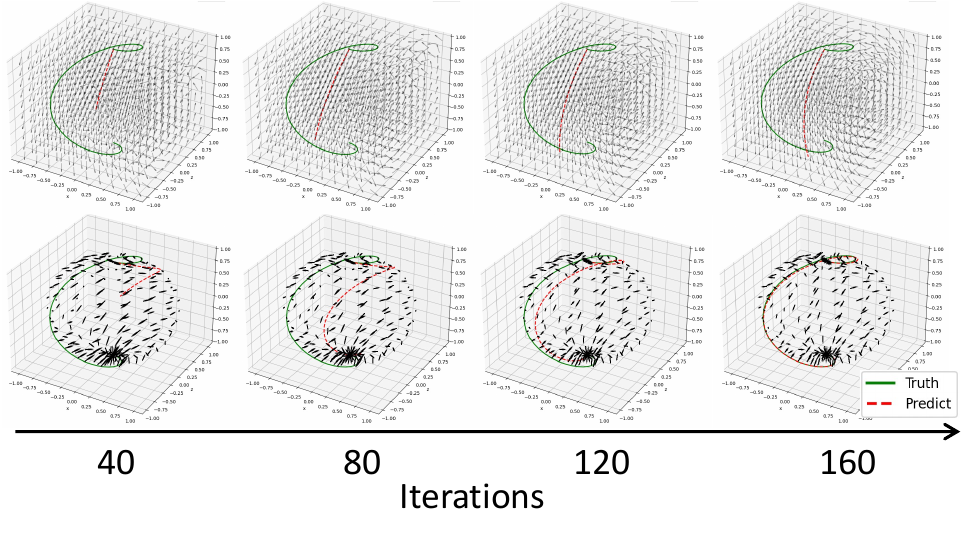}
\end{center}
\caption{Top: Vector fields from a NODE in three-dimensional Euclidean space. Bottom: Vector fields from a Manifold-constrained NODE in spherical space. 
}
\label{fig: toy_vector_field}
\end{figure}

\begin{figure}[t]
\begin{center}
\includegraphics[trim=0cm 0.5cm 0cm 0cm, clip, width=0.9\columnwidth]{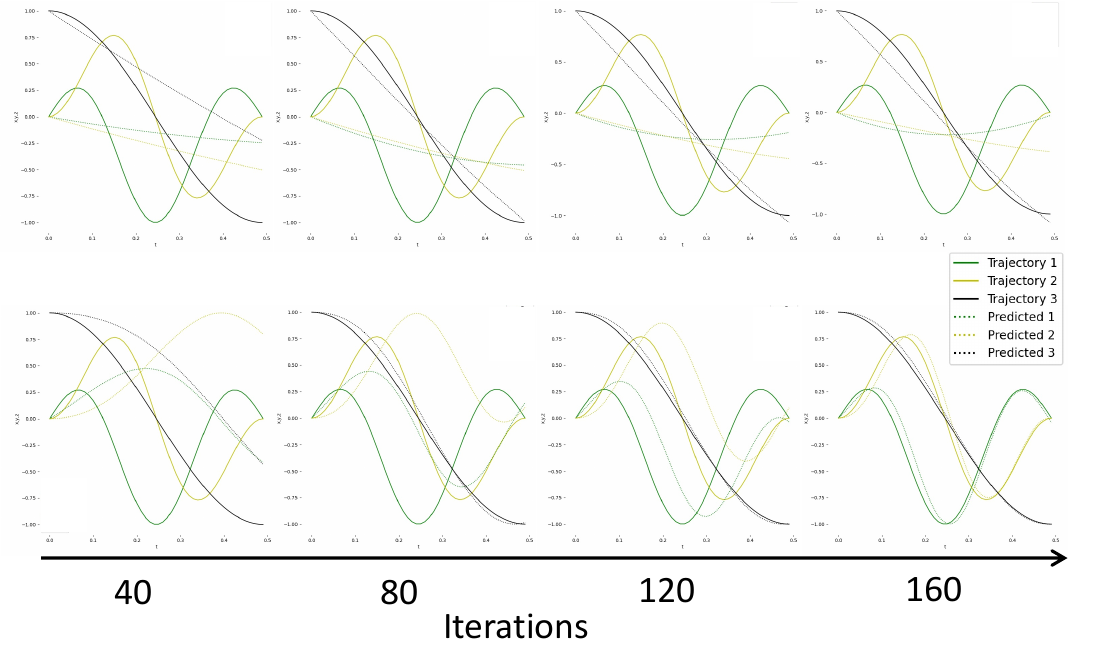}
\end{center}
\caption{Top: Trajectories in three-dimensional Euclidean space learned by NODE. Bottom: Trajectories learned by Manifold-constrained NODEs. Solid lines depict true trajectories, and dotted lines indicate the learned trajectories.}
\label{fig: toy_trajectory}
\end{figure}

The latent state $l$ within the manifold offers a more robust and expressive representation compared to the latent state $h$ within the original space. The evolution of vector fields is shown in Figure \ref{fig: toy_vector_field}. The vector field learned by the NODEs in manifold makes dynamic more explainable and easier to converge during learning.
The trajectories of these dynamics are visualized in Figure \ref{fig: toy_trajectory}. It also demonstrates the advantages of learning dynamics on the manifold.

\subsection{Image Classification with Manifold-Constrained NODE}
\label{Image Classification Results}
Considering images are usually in high dimensions, we apply our method to image classification tasks. MNIST dataset inherently inhabits a 784-dimensional space ($1 \times 28 \times 28$), and datasets like CIFAR-10 or SVHN, originally occupy a 3072-dimensional space ($3 \times 32 \times 32$). 
In image classification, the dynamics often lie in the transformation from the original or latent space to the output space, and NODEs provide a novel approach to modeling these transformations continuously.
Unlike traditional discrete methods, NODEs offer a framework for understanding the continuous trajectories of data through the model, which can enhance interpretability and act as a form of regularization. This continuous approach is particularly insightful for analyzing how input features evolve into outputs, making the learning process more transparent and interpretable.

\noindent \textbf{Graph Representation.}
In our experiment, we constructed the graph representation of the provided dataset by determining the global probability in Equation \ref{global_probability}. Each pair of data points are connected by a weighted edge. These weights are based on the global probability. We demonstrate this with an example using the MNIST dataset. For visualization purposes, we only display edges with a global probability exceeding $0.5$. As depicted in the left panel of Figure \ref{fig: graph}, it is evident that samples sharing the same label are more likely to be connected, indicating a higher probability of linkage. On the right side of Figure \ref{fig: graph}, we present the weight matrix, which has been organized according to the labels of the samples. This arrangement further highlights that samples with identical labels tend to have a higher global probability of connection.

\begin{figure}[t]
\begin{center}
\includegraphics[trim=0cm 0.2cm 0cm 0cm, clip, width=0.9\columnwidth]{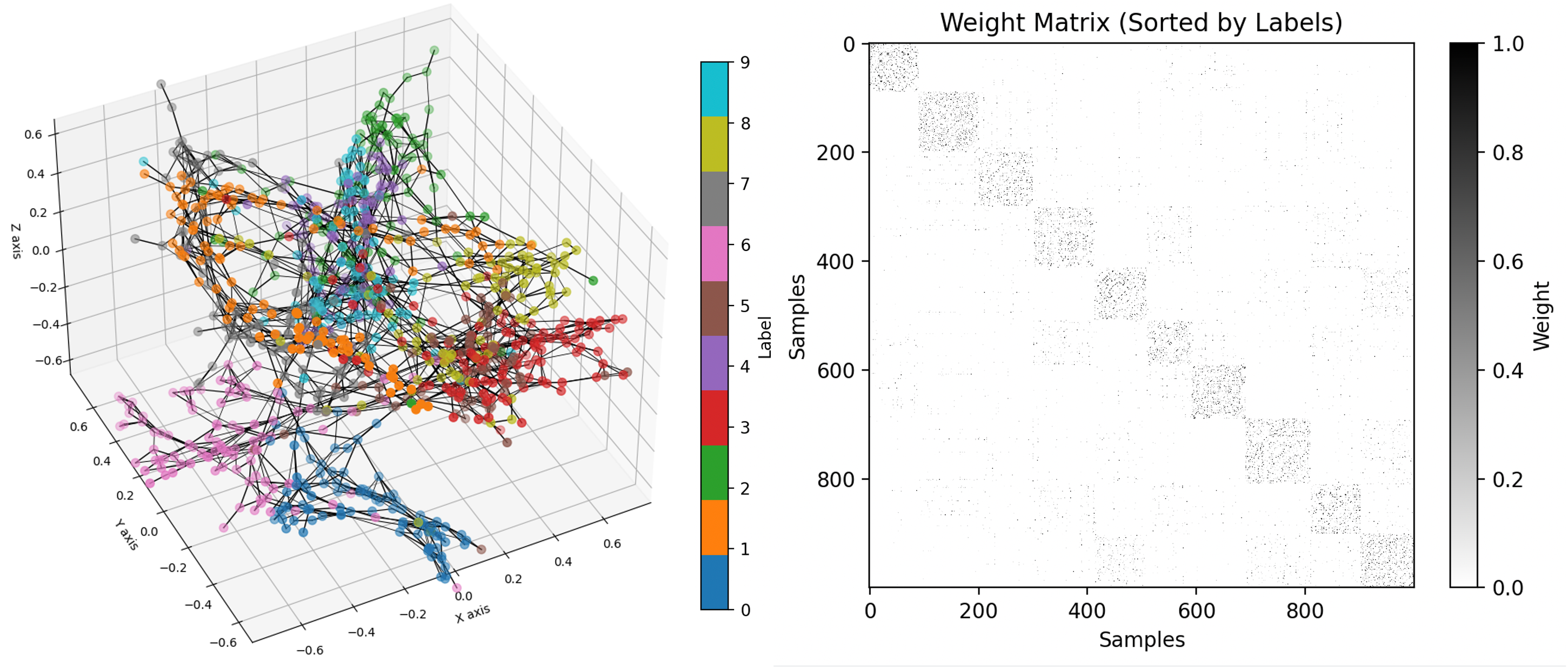}
\end{center}
\caption{The left plot illustrates the graph structure constructed for the MNIST dataset, based on global probability. In this visualization, only edges with a probability exceeding $0.5$ are displayed. The right plot presents the weighted matrix between samples, with the samples organized according to their labels.}
\label{fig: graph}
\end{figure}

Once we obtain the structure of the dataset, we employ an encoder to initialize the projection from the original space to the manifold space by minimizing the cross-entropy.
The NODE learns the dynamics in a low-dimension space where the manifold exists. During training, the encoder and the NODE are optimized synchronously.

\begin{table}
  \caption{Testing Accuracy on Image Classification}
  \label{Tab_test_acc_img}
  \centering
  \small
  \resizebox{0.6\columnwidth}{!}{
  \begin{tabular}{ccccccc}
    \toprule
             & CIFAR-10 & MNIST & SVHN  \\
    \midrule
    ResNet-10       & $0.607$ 
                    & $0.978$ 
                    & $0.604$ 
                    \\
    CNN             & $0.636$ 
                    & $0.978$ 
                    & $0.834$ 
                    \\  
    NODE            & $0.602$ 
                    & $0.944$ 
                    & $0.758$ 
                    \\
    ANODE         & $0.618$ 
                    & $0.981$ 
                    & $0.568$ 
                \\ 
    PCA+ANODE & $0.237$ 
                  & $0.385$ 
                 & $0.302$
                 \\
    \midrule
    Manifold NODE        & $\textbf{0.672}$ 
                & $\textbf{0.985}$ 
                & $\textbf{0.840}$
                \\
    \bottomrule
  \end{tabular}
  }
\end{table}

\begin{figure}[htb]
\centering 
\includegraphics[trim=0cm 0.3cm 0cm 0cm, clip, width=1\columnwidth]{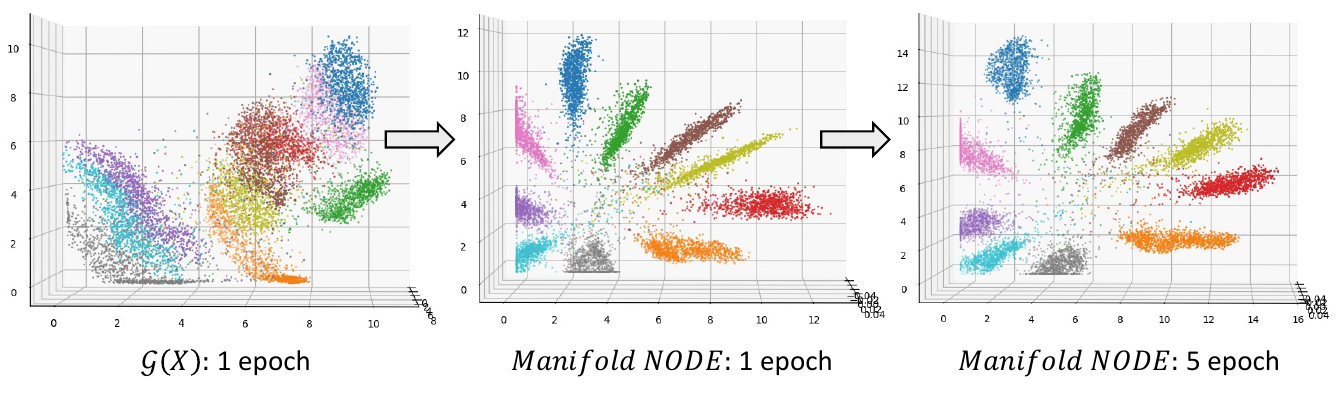}
\caption{The figure shows the evolution of MNIST test samples in three-dimensional space: after one epoch of encoder training (left), one epoch of full model training (middle), and five epochs of full model training (right).}
\label{fig: changing_during_training}
\end{figure}

\noindent \textbf{Test Accuracy.}
In our evaluation, we benchmark our approach against a range of baseline models, encompassing both continuous models like NODEs and ANODEs, as well as discrete models such as ResNet and CNNs. To validate the effectiveness of our method, we include a comparison with a hybrid approach that combines general dimensionality reduction techniques, PCA, with NODEs. For a fair comparison, we align the number of principal components ($100$) in PCA with the dimensions of the manifold space used in our method.

However, it is noteworthy that the performance of the PCA combined with ANODEs is markedly poor. This outcome underscores a critical insight: merely applying simplistic dimensionality reduction techniques, without a nuanced exploration of the data's manifold structure, is insufficient for achieving optimal results. This finding highlights the importance of more sophisticated approaches, such as the one we propose, in effectively capturing and utilizing the complex underlying structures in the data.

We present a visualization of the evolution of MNIST test samples in a three-dimensional space. 
In Figure \ref{fig: changing_during_training}, the first plot captures the positions after training only the encoder for one epoch. The second plot depicts the sample positions after one epoch of training the entire model. The third plot shows the positions after completing five epochs of training with our model. This visual progression demonstrates the dynamic changes in the data representation as the model training progresses. It highlights the increasing separation of samples with distinct labels for the training period.


\subsection{Series Classification with Manifold-Constrained NODE}
We further evaluate our method on three real-world time series datasets (BeetleFly, HandOut, and ECG200), comparing it with both discrete (ResNet-10, CNN) and ODE-based (NODE, ANODE) baselines. Table~\ref{test_acc_and_avg_NFEs_time_series} shows that our Manifold NODE consistently achieves the highest accuracy, confirming its effectiveness and robustness for continuous-time classification on lower-dimensional manifolds.

\label{Series Classification Results}
\begin{table}
  \caption{Testing Accuracy on Series Classification}
  \label{test_acc_and_avg_NFEs_time_series}
  \centering
  \small
  \resizebox{0.6\columnwidth}{!}{
  \begin{tabular}{ccccccc}
    \toprule
    & BeetleFly  & HandOut & ECG200\\
    \midrule
    ResNet-10       & $0.787$ 
                    & $0.813$ 
                    & $0.801$ 
                    \\
    CNN             & $0.789$ 
                    & $0.891$ 
                    & $0.811$ 
                    \\  
    NODE            & $0.800$ 
                    & $0.897$ 
                    & $0.836$  
                    \\
    ANODE           & $0.816$ 
                    & $0.889$ 
                    & $0.836$ 
                    \\
    PCA+ANODE       & $0.613$ 
                    & $0.735$ 
                    & $0.704$ 
                    \\
    \midrule
    Manifold NODE 
                    & $\textbf{0.867} $ 
                    & $\textbf{0.915} $ 
                    & $\textbf{0.862} $
                    \\
    \bottomrule
  \end{tabular}
  }
\end{table}

\subsection{Dimensionality Sensitivity Analysis}
We performed a dimension sensitivity analysis on three image datasets to evaluate the impact of varying dimensionality on performance in different potential manifold spaces. As shown in Figure \ref{fig: DimensionSensitivity}, our model consistently delivers high accuracy, even with the dimensionality reduced to $20$. Notably, our method sustains strong accuracy on MNIST and CIFAR-10, even when the dimensionality is decreased to three. This result emphasizes our model's capability to retain the intrinsic structure of data across a broad spectrum of manifold spaces with varying dimensions. Such adaptability to diverse dimensional spaces underscores the model's versatility and robustness, particularly in processing complex image datasets.

\begin{figure}[h]
\begin{center}
\includegraphics[width=0.9\columnwidth]{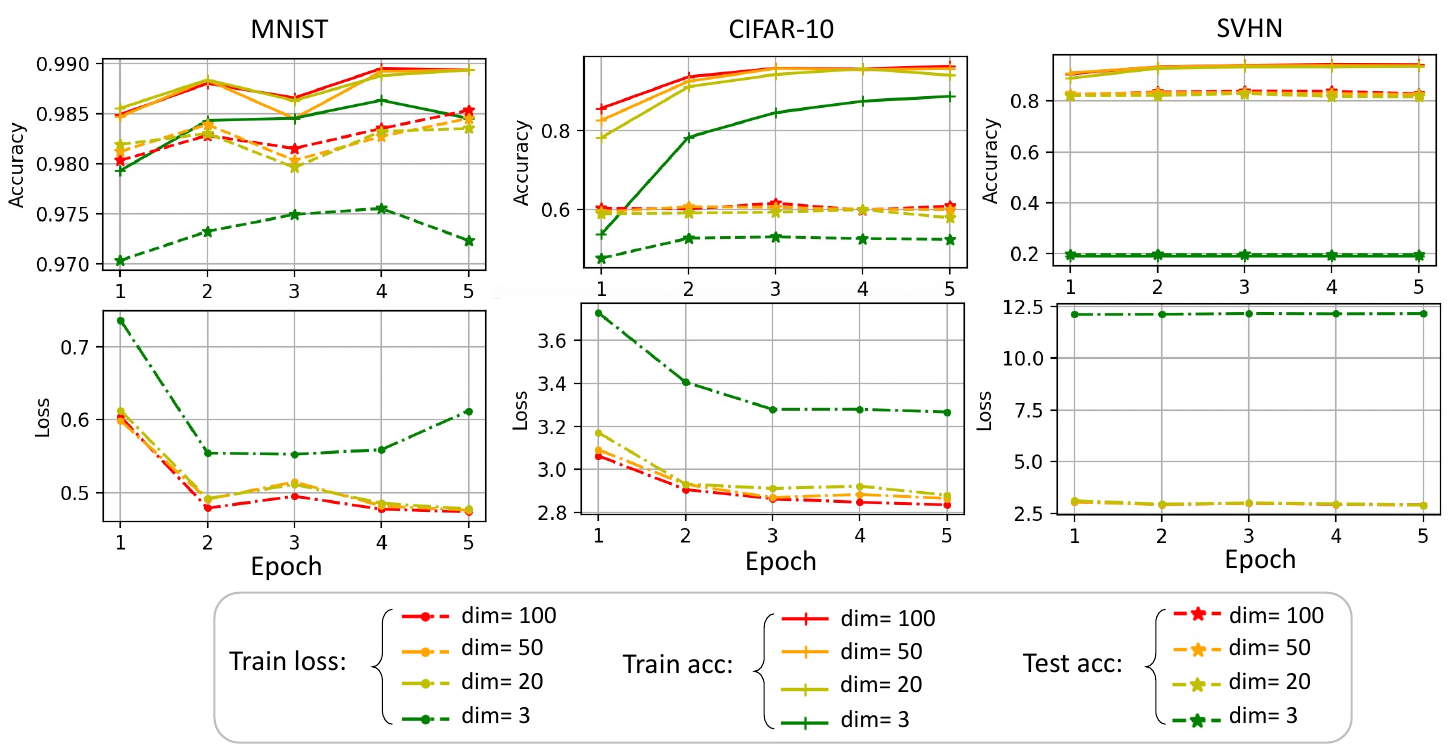}
\end{center}
\caption{Dimension sensitivity analysis across three datasets (MNIST, CIFAR-10, SVHN), shown from left to right. Top: Training and test accuracies. Bottom: Training loss.}
\label{fig: DimensionSensitivity}
\end{figure}

\subsection{NFEs and Time}
The NFEs play a vital role in defining the computational cost and efficiency of an ODE-based model. This metric indicates how many times the ODE solver computes the ODE's function while solving the ODE. In a consistent environment, an ODE-based model that necessitates fewer function evaluations typically demonstrates greater efficiency in training. Our approach, which focuses on learning a simpler vector field within a manifold, is designed to be more efficient in this regard. As a result, it requires fewer function evaluations compared to traditional NODEs (NODEs) and Augmented NODEs (ANODEs). This efficiency potentially leads to faster training times and reduced computational resource usage, while still maintaining the model's effectiveness.

\begin{figure}[h]
\centering 
\includegraphics[width=0.9\columnwidth]{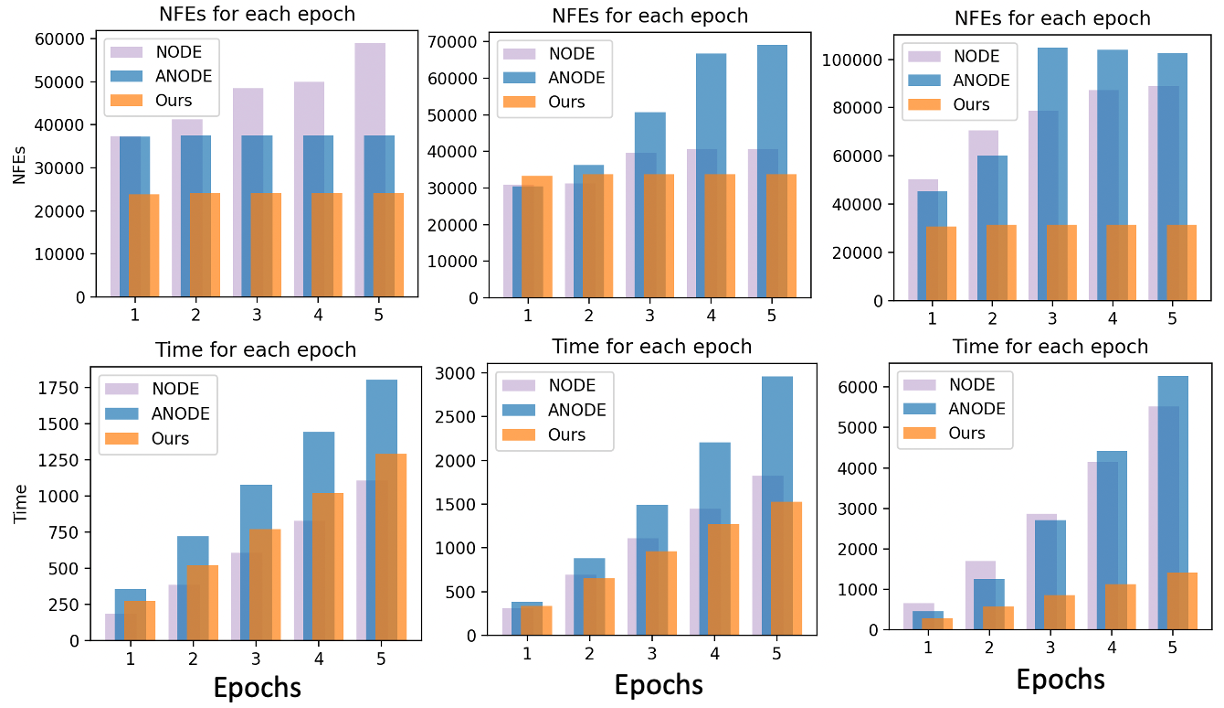}
\caption{The top three plots show the sum of NFEs for each epoch. The bottom three plots show the time of each epoch. From left to right, it is MNIST, CIFAR-10, and SVHN, respectively.}
\label{fig: NFEs_and_TIME}
\end{figure}

To test this, we measure the sum of NFEs in each epoch during the training process.
We visualize the NFEs in Figure \ref{fig: NFEs_and_TIME}. 
Our model's NFEs are fewer than baseline models. Meanwhile, we notice that our models' NFEs can maintain a relatively stable level whereas the baseline models' NFEs will increase rapidly with the training epoch increase. This is one of the reasons that our model has a fast convergence rate.

\section{Conclusion}
In this work, we introduced a novel approach to address the challenges of learning dynamics in high-dimensional space. 
By integrating manifold learning principles with NODEs, our method offers an efficient and accurate solution for dynamic learning.
We leverage the manifold hypothesis and project the original data into the manifold by an encoder while preserving the data structure.
Our methodology allows us to reduce complexity while preserving accuracy in dynamic learning. 
Experimental evaluations across diverse datasets consistently demonstrated our approach's superiority, underscoring its potential to advance our understanding of high-dimensional systems and improve modeling accuracy.

\bibliographystyle{IEEEtran} 
\bibliography{IEEEexample}  

\end{document}